\documentclass[letterpaper, 10 pt, conference]{ieeeconf}  % Comment this line out
                                                          % if you need a4paper
%\documentclass[a4paper, 10pt, conference]{ieeeconf}      % Use this line for a4
                                                          % paper

\IEEEoverridecommandlockouts                              % This command is only
                                                          % needed if you want to
                                                          % use the \thanks command
\overrideIEEEmargins

\usepackage{amsmath,amsfonts,amssymb,mathrsfs}
\usepackage{graphicx}
\usepackage{psfrag,graphicx,epsfig,epsf}
\usepackage[ruled,linesnumbered, noend]{algorithm2e}
\usepackage{fancyhdr}
\usepackage{wrapfig}
\usepackage{subfigure}
\usepackage{mathtools}
\usepackage{url}
\usepackage{color}
\usepackage{verbatim}
\usepackage{xspace}
\usepackage{cite} 
\usepackage{times}  % DO NOT CHANGE THIS
\usepackage{helvet} % DO NOT CHANGE THIS
\usepackage{courier}  % DO NOT CHANGE THIS
\usepackage{graphicx} % DO NOT CHANGE THIS
\urlstyle{rm} % DO NOT CHANGE THIS
  % DO NOT CHANGE THIS
\usepackage{graphicx}  % DO NOT CHANGE THIS
\frenchspacing  % DO NOT CHANGE THIS
\setlength{\pdfpagewidth}{8.5in}  % DO NOT CHANGE THIS
\setlength{\pdfpageheight}{11in}  % DO NOT CHANGE THIS

\usepackage{graphicx}
% \usepackage{amsmath}
% \usepackage{amsthm}
%%%%% NEW MATH DEFINITIONS %%%%%

\usepackage{amsmath,amsfonts,bm}

% Mark sections of captions for referring to divisions of figures

% Highlight a newly defined term

% Figure reference, lower-case.

% Figure reference, capital. For start of sentence

% Section reference, lower-case.

% Section reference, capital.

% Reference to two sections.

% Reference to three sections.

% Reference to an equation, lower-case.
\def\eqref#1{equation~\ref{#1}}
% Reference to an equation, upper case

% A raw reference to an equation---avoid using if possible

% Reference to a chapter, lower-case.

% Reference to an equation, upper case.

% Reference to a range of chapters

% Reference to an algorithm, lower-case.

% Reference to an algorithm, upper case.

% Reference to a part, lower case

% Reference to a part, upper case

\def\1{\bm{1}}

% Random variables

% rm is already a command, just don't name any random variables m

% Random vectors

% Elements of random vectors

% Random matrices

% Elements of random matrices

% Vectors

% Elements of vectors

% Matrix

% Tensor
\DeclareMathAlphabet{\mathsfit}{\encodingdefault}{\sfdefault}{m}{sl}
\SetMathAlphabet{\mathsfit}{bold}{\encodingdefault}{\sfdefault}{bx}{n}

% Graph

% Sets

% Don't use a set called E, because this would be the same as our symbol
% for expectation.

% Entries of a matrix

% entries of a tensor
% Same font as tensor, without \bm wrapper

% The true underlying data generating distribution

% The empirical distribution defined by the training set

% The model distribution

% Stochastic autoencoder distributions

 % Laplace distribution

\newcommand{\R}{\mathbb{R}}

% Wolfram Mathworld says $L^2$ is for function spaces and $\ell^2$ is for vectors
% But then they seem to use $L^2$ for vectors throughout the site, and so does
% wikipedia.

 % See usage in notation.tex. Chosen to match Daphne's book.

\usepackage{hyperref}
\usepackage{url}
\usepackage{graphicx}
\usepackage{amsmath}
\usepackage{float}
\usepackage{algorithm2e}

\usepackage{algpseudocode}

\newenvironment{algo}
 {\par\addvspace{\topsep}
  \centering
  \begin{minipage}{\linewidth}
  \hrule\kern2pt}
 {\par\kern2pt\hrule
  \end{minipage}
  \par\addvspace{\topsep}}

\title{\LARGE \bf
 Learning to Transfer Dynamic Models  of \\ Underactuated Soft Robotic Hands}

% \title{\LARGE \bf Stability of Transferred Models of\\ Dynamical systems}

\newcommand{\norm}[1]{\left\lVert#1\right\rVert}
\newtheorem{theorem}{Theorem}
%\iclrfinalcopy % Uncomment for camera-ready version, but NOT for submission.

\author{Liam Schramm, Avishai Sintov and Abdeslam Boularias% <-this % stops a space
\thanks{The authors are with the Computer Science Department of Rutgers University. \{lbs105,ab1544\}@cs.rutgers.edu.
This work is supported by NSF awards IIS-1734492, IIS-1723869, and IIS-1846043. The opinions and findings in this paper do not necessarily reflect the sponsor's views.}}

\begin{document}

\maketitle

\begin{abstract}

% Data efficiency is often a serious problem for learned dynamics models of soft robot, both because of their relative complexity and the robots' vulnerability to wear and tear. 
% One solution to this problem is to leverage data from similar robots by transferring preexisting dynamics models. 
% We study how to transfer neural network dynamics models from one physical robot to another using a minimal amount of data. 
% We show that common approaches such as fine-tuning often have poor results and can suffer from  behavior. 
% We then demonstrate an architecture that avoids these problems and significantly outperforms baseline methods. 

Transfer learning is a popular approach to bypassing data limitations in one domain by leveraging data from another domain. This is especially useful in robotics, as it allows practitioners to reduce data collection with physical robots, which can be time-consuming and cause wear and tear. The most common way of doing this with neural networks is to take an existing neural network, and simply train it more with new data. However, we show that in some situations this can lead to significantly worse performance than simply using the transferred model without adaptation. We find that a major cause of these problems is that models trained on small amounts of data can have chaotic or divergent behavior in some regions. We derive an upper bound on the Lyapunov exponent of a trained transition model, and demonstrate two approaches that make use of this insight. Both show significant improvement over traditional fine-tuning. Experiments performed on real underactuated soft robotic hands clearly demonstrate the capability to transfer a dynamic model from one hand to another. 

\end{abstract}

\section{Introduction}

% We study what we will refer to as Iterated Neural Nets (INNs), a special case of RNNs in which the output of the final layer of the network is fed into the input for the next step. We derive trajectory error bounds for these models given single-step error bounds. Additionally, we show that the potential of neural networks to be chaotic makes transfer learning difficult with these networks. We derive error bounds for transfer and use these results to motivate a novel transfer technique that significantly outperforms fine-tuning at a motion-planning task. 
Soft robots pose a series of challenges for open-loop control. 
Many aspects of these systems, such as the compliance and friction coefficients of each joint, are difficult to measure. 
% This makes direct modeling of these systems difficult, which in turn makes motion planning challenging. 
This makes motion planning that relies on direct modeling difficult.
Learning a dynamical model is a popular alternative to analytical and handcrafted modeling, but collecting data for these models is time-consuming and incurs wear and tear on the robot, often changing the system's dynamics before the model can be deployed. % and rendering the model inaccurate.  
 Efficient motion planning requires an accurate model that can be learned from relatively few data points.  
%Gaussian processes are sometimes used for this purpose, but have slow query times that make them inconvenient for planning. 
Neural networks are often used for this purpose. They
offer the advantage of fast queries, which allows for long-horizon planning in large state and action spaces. 
However, neural nets also require large amounts of data. 
% One way of solving this problem is to re-use data that was already gathered on a different but closely related system to improve data efficiency. This approach is known as {\it transfer learning}.
One way of solving this problem is to re-use data from a closely related system to improve data efficiency. This approach is known as {\it transfer learning}.
The standard current practice for transferring neural networks across domains consists in first training a network on a {\it source} domain, where data is relatively abundant, and then fine-tuning the network on a final {\it target} system, where data is scarce. Fine-tuning is commonly used in deep learning. 

In this paper, we show that in certain situations, fine-tuning can worsen the predictions of the source network. This is because in some dynamical systems, such as the 3D-printed soft robotic hands illustrated in Figure~\ref{fig:hands}, the gradient of the state transition function can be important, due to sudden changes in friction and compliance coefficients of the joints across the state space. In chaotic systems for instance, the derivative determines the {\it Lyapunov exponent}, which describes how much trajectories diverge over time as a function of infinitesimally small differences in their start states. We take a system to be chaotic if its maximal Lyapunov exponent is greater than 0. We show that under certain constraints, gradients of learned transition functions converge to the true gradients of the underlying ground-truth functions. However, in sparse data regimes, derivatives may strongly diverge, leading to models that are significantly more chaotic than the underlying true transition function. Based on this observation, we present two simple methods designed to prevent a transferred neural network from becoming chaotic if its underlying function is not chaotic. The key idea here is to design methods that keep the {\it Lyapunov exponent} small as opposed to methods that focus on minimizing one-step prediction errors.
Experiments on the real hands shown in Figure~\ref{fig:hands} demonstrate that both methods produce predictions that are more accurate than the popular fine-tuning approach and a few other methods. 

\begin{figure}%[t]
    \centering
    \begin{tabular}{cc}
        \includegraphics[width=0.54\linewidth]{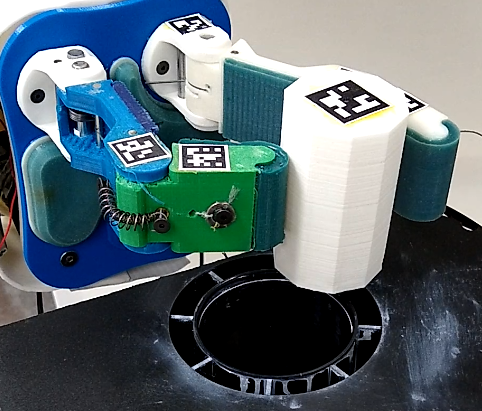} &\hspace{-0.3cm} \includegraphics[width=0.40\linewidth]{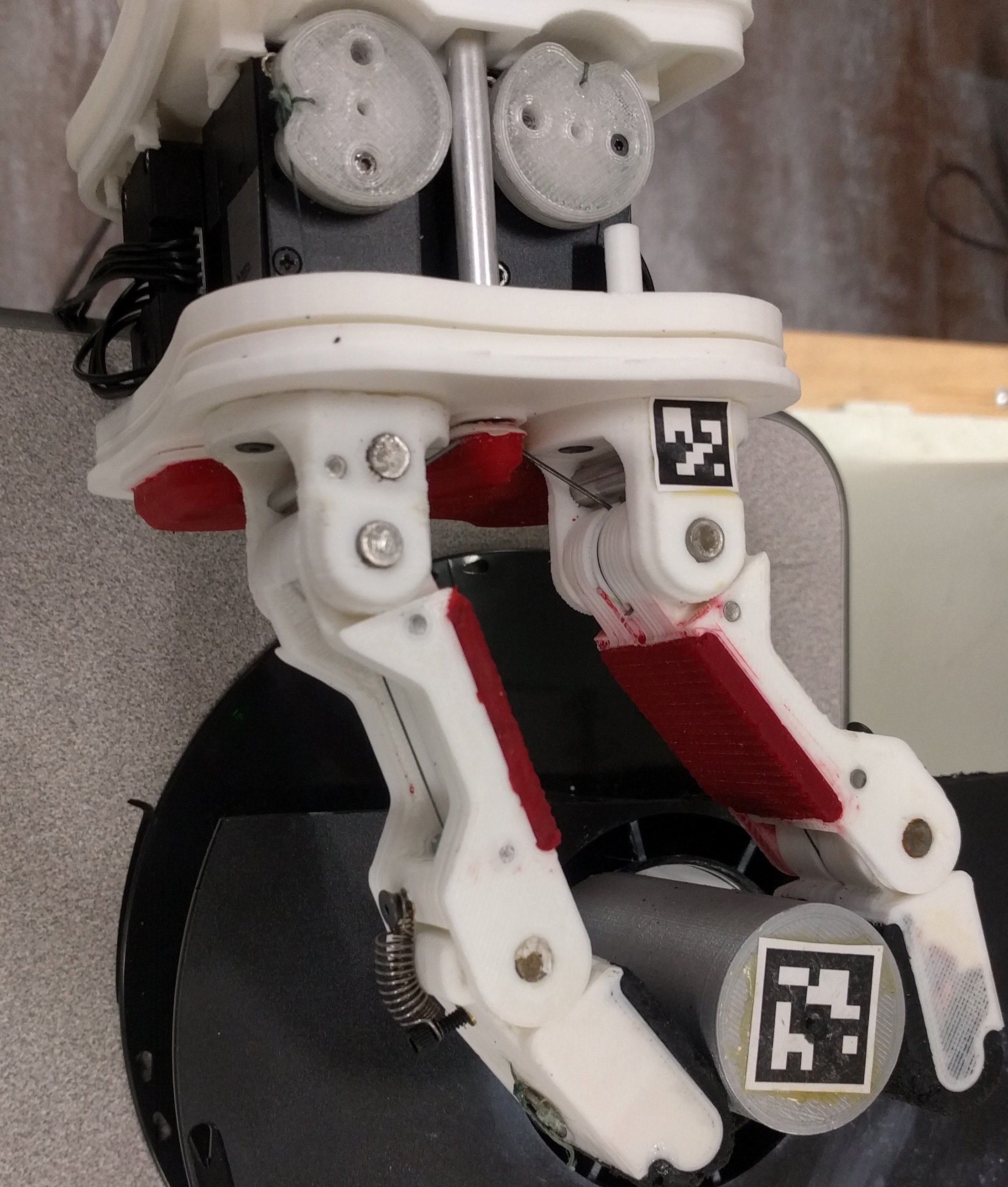} \\
        (a) Source Hand & (b) Target Hand\\
    \end{tabular}
    \caption{Model T42 3D-printed adaptive hands used in the experiments. A transition model learned from the source hand is transferred to the second hand with a small amount of data, and used to accurately control it.}
    \label{fig:hands}
\end{figure}

\section{Problem Statement}

We consider the problem of transferring a dynamic model of a source robot $S$ to a target robot $T$ that shares the same state space $\mathcal X$, a subset of $\R^n$, with metric $d$. We define a dynamical model as a transition function $f$ that maps a state-action pair $(x_t,\mu_t)$ to a next state $x_{t+1}$, i.e. $f(x_t,\mu_t)=x_{t+1}$. We denote by $f_S$ and $f_T$ the transition function of the source and target robots, respectively. A policy is a function $\pi$ that maps a state $x_t$ into an action $\mu_t$. We assume that we are given a sufficiently large set $\mathcal{D}_S$ of data points $(x_i,\mu_i,x_{i+1})$ generated with a random policy with the source robot, and a much smaller data set $\mathcal{D}_S$ obtained from the target robot, i.e. $|\mathcal{D}_T| \ll |\mathcal{D}_S|$.
In this work, we study the difference between the responses of the two robots under the same sequence of actions $\mu_1,\mu_2,\mu_3, \dots, \mu_n$, in an open-loop control. The sequence is chosen according to a desired final goal or for tracking a path. The focus of this work is not on how to choose the actions, but on how the two robots respond to the same sequence of actions. In order to ease notations, we drop the actions from the notations, and re-define the transition function accordingly as  $f^n(x)= f(\dots (f(f(f(x,\mu_1),\mu_2),\mu_3), \dots, \mu_n)$. 
Let $\hat{f}_S$ denote an approximation of the unknown true function $f_S$, learned from the source data set $\mathcal{D}_S$. Let $\hat{f}_T$ denote an approximation of the unknown true function $f_T$, learned from the target data set $\mathcal{D}_T \cup \mathcal{D}_S$. In the present work, we use neural networks for learning both $\hat{f}_S$ and $\hat{f}_T$.  
Predicted trajectories will thus be given by $x_n = \hat{f}^n(x_0)$.
Let the time before divergence be given by 
%\begin{eqnarray}
$
t(x, \epsilon) = \text{max}_{n \in \mathbb N}  [ d\big(\hat{f}^n_T(x), f^n_T(x)\big) < \epsilon]
$
%\end{eqnarray}
where $\epsilon$ is some constant.
Our goal is to find some model $\hat{f}_T$ that maximizes the average value of $t(x,\epsilon)$ over all $x \in \mathcal X$ for some chosen value of $\epsilon$.
This metric is useful because it measures how far into a future a model can predict with reasonable accuracy, which is important for path planning, as it relates to the planning horizon. 
$t(x,\epsilon)$ is influenced by two main factors. 
The first is the learned model's prediction error at each step. 
The second is the Lyapunov exponent of the learned model, defined for discrete time systems with transition function $f$ as 
%\begin{eqnarray*}
$\lambda_{f}(x_0) = \lim_{n \to \infty} \frac{1}{n} \sum_{i=0}^{n-1} \ln{ |f'(x_i)| }
\textrm{ with } x_{i+1} = f(x_i). $
\footnote{One-step error accounts for all "new" error, introduced by inaccuracy in the model. The Lyapunov exponent accounts for the growth of error from earlier steps.}
%\end{eqnarray*}

%------------------REVISION -----------------------------------------------

%Footnote: 

%------------------REVISION ----------------------------------------------- 

The problem then consists in finding from $\mathcal{D}_T \cup \mathcal{D}_S$ a function $\hat{f}_T$ that approximates $f_S$ but that also has a small Lyapunov exponent $\lambda_{\hat{f}_T}$ to avoid chaotic divergences that typically occur in recurrent neural networks. 

%\subsection{Training dynamics models} 
% A key problem in training this class of models is that, while the robots in question clearly do not have chaotic dynamics, it does not necessarily follow that models of the system are non-chaotic. 

%Although it is possible to construct neural networks that are not chaotic \cite{Laurent2016ARN}, these involve learned representations in memory that can be data-inefficient, making them a poor choice for transfer learning. With more standard architectures, it takes a fair amount of data to train neural networks to not have divergent or chaotic behavior, and regions with poor data coverage may still exhibit this behavior. 

% This is especially problematic for transfer learning, since much of the state space will have very little data generated by the target robot.
% it is \textit{a priori} difficult to tell if 
% it is difficult to build a class of neural networks that are expressive, but \textit{a priori} non-chaotic. 
% This is because neural networks are universal function approximators, and chaos becomes common in discrete dynamical systems beyond a certain degree of complexity. 
% Thus, it takes a fair amount of data to train neural networks to not have divergent or chaotic behavior, and regions with poor data coverage may still exhibit this behavior. 
% It is essentially a particularly damaging form of overfitting. 
% This is especially problematic for transfer learning, since much of the state space will have very little data generated by the target robot.

\section{Related Works}

\subsection{Underactuated Adaptive Hands}

Deriving analytical models for underactuated hands is a challenging task due to the complex response of the passive joints and uncertainties in internal frictions of joints and external frictions between fingers and object. To predict the configuration of the gripper, one needs to know precisely the forces being applied on the object, which cannot be obtained when tactile sensing is not available to estimate friction at every point on the contact surface.
Typical models assume a uniform friction on the contact surface  \cite{Yu2016}. Moreover, controlling individual joint positions in an underactuated system is a challenging problem. 
Models for underactuated manipulation typically use simplified frictional models while examining joint configurations, joint torques, and energy~\cite{Laliberte1998,Odhner2011, Rocchi2016}. A popular approach applies screw theory to further simplify the derivation for a model \cite{Borras2013}. Other modeling methods can be found in \cite{Hussain2018, Prattichizzo2012, Grioli2012}. These proposed techniques have been shown to be sensitive to assumptions in external constraints. They are generally used for simulations only. To control real underactuated adaptive hands, models of dynamics need to be learned or tuned from data collected with the real hands. 

\subsection{Learning Dynamical Models} 

A dynamical model is a mapping from a given state and action to the next state. Such models play a key role in model-based RL \cite{Polydoros2017,Bagnell2001,ShaojunIROS2017,ShaojunIJCAI2018,BoulariasBS15,Sintov2019,Belief2019,SintovDataset2019}. A common approach for modeling of dynamical systems is the Gaussian Process (GP) \cite{Rasmussen2005,Ko2007, Deisenroth2014}. Usages of data-driven models include learning the distribution of an object's position after a grasp \cite{Paolini2014} or during re-grasping \cite{Paolini2016}, and a hybrid modeling approach combining analytical and data-based models to improve accuracy in feed-forward control~\cite{Reinhart2017}. Neural networks have became more popular recently thanks to their simplicity, capacity of learning, and scalability to large amounts of data, in contrast to nonparametric methods such as GPs.

\subsection{Neural Dynamical Systems}
A number of works~\cite{Zerroug2013ChaoticDB,Pascanu:2013:DTR:3042817.3043083,Sussillo:2013:OBB:2443809.2443811} have studied the dynamical properties of recurrent neural networks, with both~\cite{Zerroug2013ChaoticDB} and ~\cite{Pascanu:2013:DTR:3042817.3043083} describing situations in which chaotic behavior can arise. The work by~\cite{Laurent2016ARN} explores a variation of the recurrent neural network that does not exhibit chaotic behavior. However, we find that for dynamics models it is more accurate and more data-efficient to predict the change in state and feed the new state to the model than it is to predict the state directly and use memory to learn a representation of the state. Chaining together predictions as we do would in the present work causes the network of~\cite{Laurent2016ARN} to become chaotic, so this method is not applicable for open-loop control. 

\subsection{Transfer Learning in Robotics}
Training dynamical models typically requires several hours of data collection~\cite{NguyenTuongP2011}. %Deep neural networks in particular are known for their need for large amounts of data. 
To alleviate this problem, several works considered transferring learned models across different robots or tasks\cite{8461218,DBLP:journals/corr/HelwaS17,ChristianoSMSBT16,DBLP:conf/icra/PengAZA18,DevinGDAL16,GuptaDLAL17,BocsiCP13,Marco2017VirtualVR,6696580,Makondo2015KnowledgeTF,Pereida_2018}. A typical approach to transfer learning consists in projecting data from two domains into a low-dimensional manifold~\cite{pan2009survey}, where correspondences between data points from the two domains can be learned without supervision~\cite{Wang:2009:MAW:1661445.1661649}. This approach was adopted for transferring inverse dynamics models of rigid robotic manipulators~\cite{BocsiCP13,Makondo2015KnowledgeTF,8461218}, wherein the models are learned with the LWPR method while the low-dimensional manifold was given by PCA. Multi-robot transfer learning was also studied from a dynamical system perspective in~\cite{DBLP:journals/corr/HelwaS17} wherein sufficient conditions for a bounded-input, bounded-output (BIBO) stability are provided for transfer learning maps. A closely related transfer technique was used in~\cite{Pereida_2018} for trajectory tracking with quadrotors. While the present work also studies the transfer problem from a dynamical system perspective, it focuses on the Lyapunov stability.% which is more appropriate due to the chaotic behavior of soft robots in open-loop control. 

An increasingly popular approach for reducing the data needs in robot learning is to transfer trained models from simulation to real world~\cite{DBLP:conf/icra/PengAZA18,ChristianoSMSBT16,DevinGDAL16,GuptaDLAL17,Marco2017VirtualVR}. {\it Sim-to-real} has been used mostly for transferring policies in model-free RL, with only a few works related to the transfer of dynamics. In~\cite{7759592}, a dynamics model is adapted online by combining prior knowledge from previous tasks. Another technique~\cite{ChristianoSMSBT16} combines a forward dynamics model, trained in simulation, with an inverse dynamics model trained on a real robot. %In contrast, our method transfers across forward dynamics models of two real robotic hands. 

\begin{figure}[t]
    \centering
        \includegraphics[width=.7\linewidth]{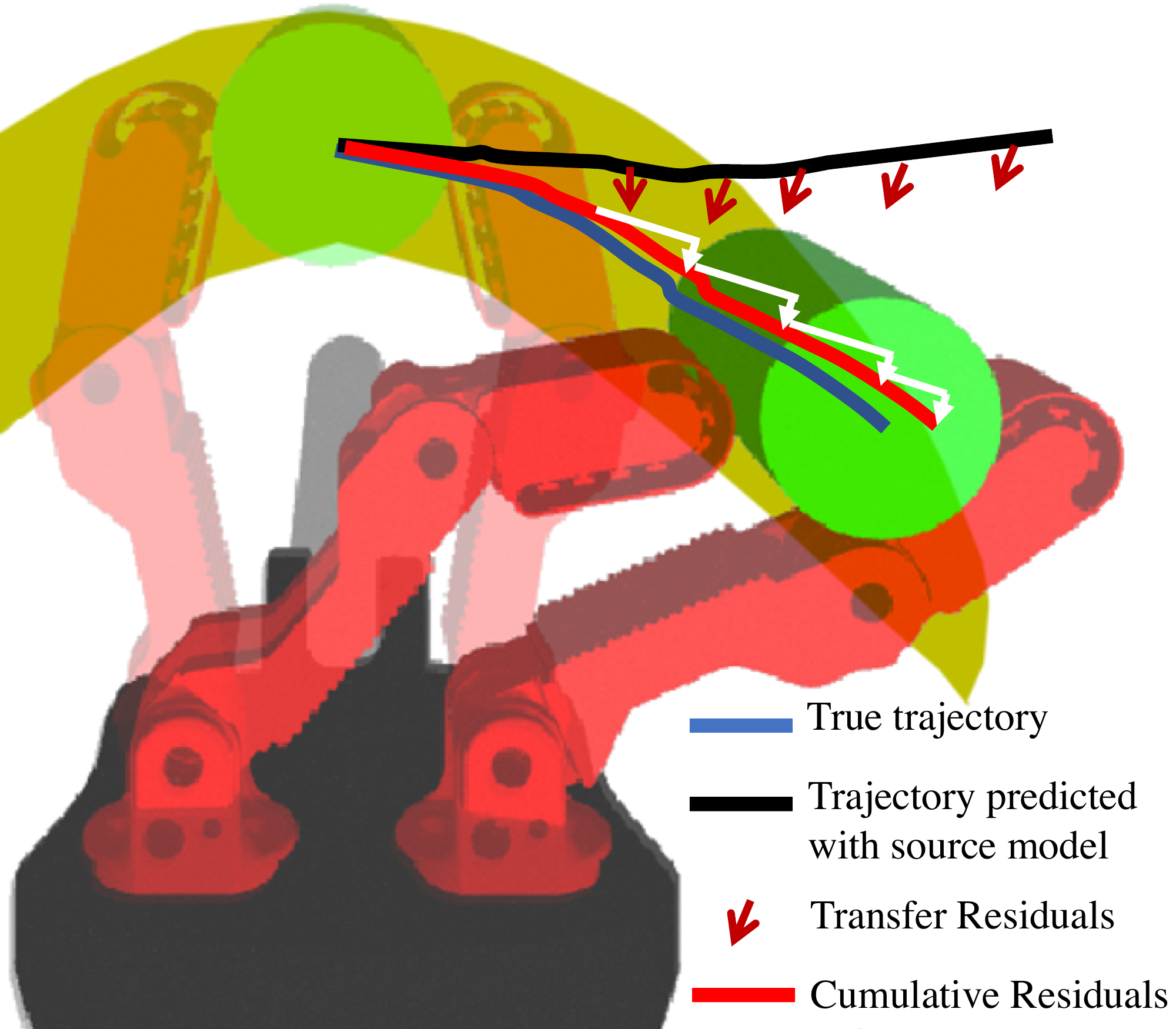}
    \caption{Workspace of the considered hand. Errors of predicted future states accumulate over time. The proposed cumulative residuals approach solves this issue by bounding the Lyapunov exponent of the transferred model.}
    \label{fig:workspace}
\end{figure}

\section{Proposed Analysis}

To derive a new transfer learning algorithm that meets the specific challenges of our problem, we first analyse the Lyapunov exponent of a learned transition function $\hat{f}$ as a function of the training data $\mathcal D$ and the Lyapunov exponent of the underlying true dynamics $f$. 
We find the primary driver of divergence in the target regime is chaotic behavior, characterized by a large Lyapunov exponent.
As we train a new model, we find that training not only reduces the average error of the model, but also its Lyapunov exponent. Even when a model does not show outright chaotic behavior, it tends to compound on its own errors in a way that damages its performance. This will be discussed further in the experiments section.
Intuitively, we show that as we accumulate more data, the models we train become less chaotic. To do this, we derive an upper bound on the Lyapunov exponent of the learned transition function.

{\noindent \bf Definitions. } Let $r$ be the maximum distance from any point $x \in \mathcal X$ to the nearest point in $\mathcal {D}$. Let  $\epsilon$ be the maximum error of $\hat{f}$'s 
predictions on data $\mathcal {D}$. In other terms, $\epsilon = \max_{x\in \mathcal D} \|\hat{f}(x) - f(x) \|_d$, according to metric $d$.

% Given this, it is easy to show that for any point $x_1 \in D$, there exists a point $x_2 \in D$, such that $d(x_1, x_2) \leq 2r$ and $x_1 \neq x_2$

{\noindent \bf Assumptions.} 
% Suppose $max_{x \in \mathcal X} |f''(x)| \leq c$ and $max_{x \in \mathcal X} |\hat{f}''(x)| \leq c$ for some constant $c$. 

% GENERAL CASE PROOF

Let $\hat{f}: S \to S$ be an approximation to $f$ such that $d(\hat{f}(x), f(x)) \leq \epsilon$ for all $x \in D$. 
 
Let $ (\nabla \hat{f}(a) - \nabla \hat{f}(b)) \cdot \frac{u}{\norm{u}} \leq c \norm{a-b}$ and $ (\nabla f(a) - \nabla f(b)) \cdot \frac{u}{\norm{u}} \leq c \norm{a-b}$ for some $c \in \R$ for all $a, b\in S$ and all $u \in \R^N$. 

Let $\lambda_{\hat{f}}$ be the maximal Lyapunov exponent of $\hat{f}$, and  $\lambda_f$ be the maximal Lyapunov exponent of $f$.

% We can establish a maximum bound on the gradient using the Mean Value Theorem. 
Definition: Let $x \in D$ be an interior point at distance $s$ iff the hypersphere of radius $s$ about $x$ is completely contained in $S$
% for all points $y\in \R^N$ with $\norm{y-x} = s$, $y\in S$

\begin{theorem}
$\lambda_{\hat{f}} \leq \lambda_f +\lim_{n \to \infty} \frac{1}{n} \sum_{i=0}^n \log (1 + (4\sqrt{6r^2c^2 + \epsilon c} +10 r c) N \norm{J_i^{-1}}) $
\end{theorem}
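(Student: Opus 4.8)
The plan is to reduce everything to a uniform pointwise bound on the discrepancy between the Jacobians of $\hat{f}$ and $f$, and then feed that bound through the definition of the Lyapunov exponent. Writing $J_i$ for the Jacobian of $f$ and $\hat{J}_i$ for the Jacobian of $\hat{f}$ along the iterated trajectory, and reading the Lyapunov exponent defined above in its multidimensional form $\lim_{n\to\infty}\frac{1}{n}\sum_i \log\norm{J_i}$, I would set $E_i = \hat{J}_i - J_i$ and factor $\hat{J}_i = J_i\,(I + J_i^{-1}E_i)$. Submultiplicativity of the operator norm then gives $\norm{\hat{J}_i} \le \norm{J_i}\,(1 + \norm{J_i^{-1}}\,\norm{E_i})$, and taking logarithms separates the two contributions: $\log\norm{\hat{J}_i} \le \log\norm{J_i} + \log(1 + \norm{J_i^{-1}}\,\norm{E_i})$. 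Averaging over $i$ and sending $n\to\infty$ turns the first term into $\lambda_f$ and leaves exactly the claimed residual, provided we can show $\norm{E_i} \le (4\sqrt{6r^2c^2+\epsilon c}+10rc)\,N$ uniformly. So the whole theorem rests on a single estimate: the gradient error of $\hat{f}-f$ must be controlled by the function-value error $\epsilon$, the smoothness constant $c$, and the covering radius $r$.

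Establishing that estimate is the heart of the argument, and it is essentially a scattered-data Landau--Kolmogorov inequality. Fix a component $g = \hat{f}^{(k)} - f^{(k)}$; by the two gradient assumptions its Hessian has operator norm at most $2c$, and $|g|\le\epsilon$ at every data point. I would first bound $\sup|g|$ on a small ball around any trajectory point: any point in that ball lies within $r$ of some data point, so a first-order Taylor expansion gives $|g| \le \epsilon + Gr + cr^2$, where $G$ is the (as yet unknown) supremal gradient norm of $g$. The interior-point-at-distance-$s$ hypothesis is what guarantees such a ball actually lies inside $S$, so the expansions are legitimate in every direction. Next I would apply the one-dimensional Landau inequality $|h'(0)|\le\sqrt{2}\,\sqrt{\norm{h}_\infty\norm{h''}_\infty}$ along the line through the gradient direction, converting the value bound into $G \lesssim \sqrt{c\,(\epsilon + Gr + cr^2)}$. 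This is self-referential in $G$; squaring and solving the resulting quadratic yields a bound of the form $10rc + 4\sqrt{6r^2c^2+\epsilon c}$ after bounding constants, matching the bracket in the statement. Finally, since each row of $E_i$ is one such gradient difference, converting the row-wise bound to the matrix operator norm costs at most a factor $N$.

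The main obstacle I anticipate is precisely this value-to-gradient conversion, because we only know that $g$ is small on the discrete data set, not on all of $S$; the bound on $G$ therefore appears on both sides and must be closed by the bootstrapping/quadratic step above, which is what forces the square-root form and is the source of the non-obvious constants. A secondary subtlety is that $\lambda_{\hat{f}}$ is naturally computed along the trajectory of $\hat{f}$, whereas $\lambda_f$ refers to $f$; since the estimate on $\norm{E_i}$ holds uniformly over $S$, I would evaluate both Jacobians at the same approximate-trajectory points and identify the averaged $\log\norm{J_i}$ terms with $\lambda_f$, so the uniformity of the pointwise bound is exactly what lets the two exponents be compared index by index. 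Once the gradient estimate is in hand, reassembling the three displayed inequalities and passing to the limit is routine.
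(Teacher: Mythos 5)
Your overall architecture is the same as the paper's: reduce the theorem to a uniform bound $\norm{\hat{J}_i - J_i}\leq bN$ with $b = 4\sqrt{6r^2c^2+\epsilon c}+10rc$, factor the perturbed Jacobian multiplicatively as $J_i(I+\cdot)$, and peel off the perturbation with submultiplicativity and logarithms. Where you genuinely diverge is in how the central gradient estimate is obtained. The paper never forms a self-referential inequality: it applies the mean value theorem twice along the chord between two data points (where the function values of $\hat f$ and $f$ agree to within $\epsilon$), obtains a bound on the \emph{directional} derivative of $\hat f - f$ along that chord, converts it to a bound on the full gradient norm by choosing the second data point within $r$ of a point displaced in the direction of $\nabla\hat f - \nabla f$, and then optimizes explicitly over the displacement $\Delta x = r+\sqrt{6r^2+\epsilon/c}$, which is where the constants $4$, $6$, and $10$ come from. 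Your route --- extend the value bound from $D$ to a ball at the cost of $Gr+cr^2$, invoke the one-dimensional Landau inequality, and close the resulting quadratic in $G$ --- is the continuous-Landau version of the same Landau--Kolmogorov mechanism; it is a legitimate alternative, would in fact yield a slightly smaller constant than the paper's bracket (so the stated bound still follows), and needs the same interior-ball hypothesis to license the expansions. Both derivations are morally the same conversion of value error plus second-derivative control into first-derivative control, executed by different means.

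There is, however, one step in your assembly that does not close as written. You read the Lyapunov exponent as $\lim_{n\to\infty}\frac{1}{n}\sum_i\log\norm{J_i}$ and identify the averaged $\log\norm{J_i}$ terms with $\lambda_f$. That identification is false in general: the maximal Lyapunov exponent is $\lim_n\frac{1}{n}\log\norm{\prod_i J_i}$, and by submultiplicativity $\frac{1}{n}\sum_i\log\norm{J_i}$ only \emph{upper}-bounds it, with strict inequality for generic non-commuting cocycles. So your final display proves $\lambda_{\hat f}\leq\lim\frac1n\sum_i\log\norm{J_i}+\lim\frac1n\sum_i\log(1+bN\norm{J_i^{-1}})$, whose first term can exceed $\lambda_f$. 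The paper avoids this by keeping the product $A_n=\prod_i J_i$ intact (so that $\lim\frac1n\log\norm{A_n}=\lambda_f$ is the correct identification) and only factoring out the perturbation terms $(I+bM_iJ_i^{-1})$ --- though it should be said that the paper's own regrouping of the interleaved product into $A_n\prod_i(I+bM_iJ_i^{-1})$ silently commutes non-commuting matrices, so neither assembly is fully rigorous. To repair your version you would need either to keep the full product together as the paper does, or to add a hypothesis under which the per-step norm average does equal $\lambda_f$.
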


\begin{proof}

Let $x_1$ be a point in $D$.  Let $x_1$ be an interior point at distance $\Delta x$. Since no point is $S$ is farther than $r$ from the nearest point in $D$, let us choose some point $p \in S$ a distance $\Delta x$ from $x_1$, and let $x_2$ be its nearest neighbor in $D$, with $\Delta x > r$. Let $z = x_2 - x_1$. Then $\Delta x - r \leq \norm{z} \leq \Delta x + r$. 

We begin by establishing a bound on the gradient of each output variable of $m$

From the mean value theorem, we can see that for some $x_3 = x_1(1-t_1) + x_2t_1$ with $t_1 \in [0,1]$, 
\begin{align}
    % m'(x_3) = \frac{\hat{f}(x_2) - \hat{f}(x_1)}{ x_2 - x_1}
    \hat{f}(x_2) - \hat{f}(x_1) = \nabla \hat{f}(x_3) \cdot z
\end{align}
From this, it follows that
\begin{align}
    \nabla \hat{f}(x_3) \cdot z \leq f(x_2) - f(x_1) + 2 \epsilon
% m'(x_3) 
% \leq \frac{f(x_2) - f(x_1)}{ x_2 - x_1} + \frac{2 \epsilon}{ x_2 - x_1} \\
% \leq \frac{f(x_2) - f(x_1)}{ x_2 - x_1} + \frac{2 \epsilon}{ \Delta x - r} 
\end{align}

Applying the MVT again to $f$, we find that 
for some $x_4 = x_1(1-t_2) + x_2t_2$ with $t_2 \in [0,1]$
\begin{align}
    \nabla \hat{f}(x_3) \cdot z \leq \nabla f(x_4) \cdot z + 2 \epsilon \\
    (\nabla \hat{f}(x_3) - \nabla f(x_4)) \cdot z \leq  2 \epsilon \\
    (\nabla \hat{f}(x_3) - \nabla f(x_4)) \cdot \frac{z}{\norm{z}} \leq  \frac{2 \epsilon}{\norm{z}}
\end{align}

% We know that $\mid x_1 - x_3 \mid \leq \Delta x + r$ and $\mid x_1 - x_4 \mid \leq \Delta x + r$. 
% From the constraint on $m''$, we can see that 
From assumption 3, we can see that
\begin{align}
    \nabla \hat{f}(x_3) \cdot \frac{z}{\norm{z}} \leq \nabla \hat{f}(x_1) \cdot \frac{z}{\norm{z}} + c \norm{z}
\end{align} 
Similarly, 
\begin{align}
    \nabla f(x_4) \cdot \frac{z}{\norm{z}} \leq \nabla f(x_1) \cdot \frac{z}{\norm{z}} + c \norm{z}
\end{align} 
Hence, 
\begin{align}
    (\nabla \hat{f}(x_1) - \nabla f(x_1)) \cdot \frac{z}{\norm{z}} \leq  \frac{2 \epsilon}{\norm{z}} + 2c \norm{z}
\end{align}
By the same reasoning, we find that for all $x \in S$ with $\norm{x_1 - x} < r$
\begin{align}
    (\nabla \hat{f}(x) - \nabla f(x)) \cdot \frac{z}{\norm{z}} \leq  \frac{2 \epsilon}{\norm{z}} + 2c \norm{z} + 2c r \\ 
    (\nabla \hat{f}(x) - \nabla f(x)) \cdot z \leq  2 \epsilon + 2c \norm{z}^2 + 2c r \norm{z}
\end{align}

% Let us take $\Delta x = \sqrt{\epsilon} + r$. Then for any $x \in S$,
% \begin{align}
%     (\nabla \hat{f}(x) - \nabla f(x)) \cdot \frac{z}{\norm{z}} & \leq  \frac{2 \epsilon}{\Delta x - r} + 2c (\Delta x + r) + 2c r \\ 
%     & \leq  2 \sqrt{\epsilon} + 2c (\sqrt{\epsilon} + 2r) + 2c r \\ 
%     & \leq  2 \sqrt{\epsilon} (c + 1) + 6 c r 
% \end{align}

As we can see, this holds for any $x_2$ with $\Delta x - r \leq \norm{x_2 - x_1} \leq \Delta x + r$. 

We use the following trick obtain a bound on the norm of $\nabla \hat{f}(x) - \nabla f(x)$. 
Let $v = \Delta x \frac{\nabla \hat{f}(x) - \nabla f(x)}{\norm{\nabla \hat{f}(x) - \nabla f(x)}}$. Then there exists an $x_2 \in D$ within radius $r$, so the above bound holds with $z = v + u$ with $\norm{u} \leq r$. 

Hence, 

\begin{align}
    (\nabla \hat{f}(x) - \nabla f(x)) \cdot z \\
    = (\nabla \hat{f}(x) - \nabla f(x)) \cdot (\Delta x \frac{\nabla \hat{f}(x) - \nabla f(x)} { \norm{\nabla \hat{f}(x) - \nabla f(x)}} + u) \\
     = \Delta x \norm{\nabla \hat{f}(x) - \nabla f(x)}+ (\nabla \hat{f}(x) - \nabla f(x)) \cdot u  \\
     \geq  \Delta x \norm{\nabla \hat{f}(x) - \nabla f(x)} - \norm{\nabla \hat{f}(x) - \nabla f(x)} \norm{u} \\
     \geq  \Delta x \norm{\nabla \hat{f}(x) - \nabla f(x)} - \norm{\nabla \hat{f}(x) - \nabla f(x)} r  \\
     = \norm{\nabla \hat{f}(x) - \nabla f(x)} (\Delta x  -  r)
    %  = \norm{\nabla \hat{f}(x) - \nabla f(x)} \frac{\sqrt{\epsilon} }{\sqrt{\epsilon} +  2r}
\end{align}

We can then see that 

\begin{align}
    \norm{\nabla \hat{f}(x) - \nabla f(x))} \leq  \frac{2 \epsilon + 2c \norm{z}^2 + 2c r \norm{z}}{\Delta x  -  r} \\
    \leq  \frac{2 \epsilon + 2c (\Delta x  +  r)^2 + 2c r (\Delta x  +  r)}{\Delta x  -  r}
\end{align}

Rearranging this expression, we can see this is equivalent to 

\begin{align}
    \norm{\nabla \hat{f}(x) - \nabla f(x))}
    \leq 2c  \frac{\Delta x^2 + 3 \Delta x r + 2r^2 + \frac{\epsilon}{c}}{\Delta x  -  r} \\ 
    = 2c  (\frac{(\Delta x - r)(\Delta x +4r)+ 6r^2 + \frac{\epsilon}{c}}{\Delta x  -  r}) \\ 
    = 2c  (\Delta x + 4r+ \frac{6r^2 + \frac{\epsilon}{c}}{\Delta x  -  r}) 
\end{align}

Let us take $\Delta x = r + \sqrt{6r^2 + \frac{\epsilon}{c}}$. Then

\begin{align}
    \norm{\nabla \hat{f}(x) - \nabla f(x))}  \\
    \leq 2c  (\sqrt{6r^2 + \frac{\epsilon}{c}} +5r+ \frac{6r^2 + \frac{\epsilon}{c}}{\sqrt{6r^2 + \frac{\epsilon}{c}}}) \\
    = 2c  (\sqrt{6r^2 + \frac{\epsilon}{c}} +5r+ \sqrt{6r^2 + \frac{\epsilon}{c}}) \\
    = 2c  (2\sqrt{6r^2 + \frac{\epsilon}{c}} +5r) \\
    = 4\sqrt{6r^2c^2 + \epsilon c} +10 r c
%     \leq \frac{2 \epsilon + 2c (2r + \sqrt{6r^2 + \epsilon})^2 + 2c r (2r + \sqrt{6r^2 + \epsilon})}{\sqrt{6r^2 + \epsilon}} \\
%     = \frac{2 \epsilon + 8cr^2 + 8cr\sqrt{6r^2 + \epsilon} + 2c(6r^2 + \epsilon) + 4c r^2  + 2c r\sqrt{6r^2 + \epsilon})}{\sqrt{6r^2 + \epsilon}  -  r}
\end{align}

Repeating for each of the output dimensions of $m$, we obtain a bound on each row of the Jacobian of $m$ according to the proven inequality. 
Let $b = (4\sqrt{6r^2c^2 + \epsilon c} +10 r c)$, and let $M(y)$ be a matrix whose elements are all in the range $[-1, 1]$. Then
\begin{align}
J_{\hat{f}}(y) = J_{f}(y) + M(y)b
\end{align}. 

The Lyapunov numbers of the system are found using the following set of equations.
\begin{align}
    J_i = J_f(x_i) \\
    A_n = \prod_{i=0}^n J_i \\
    \lambda_f = \lim_{n \to \infty} \frac{1}{n} \log \lambda(A_n) \\
\end{align}

For the Lyapunov exponent of $\hat{f}$, we instead have
\begin{align}
    \lambda_{\hat{f}} = \lim_{n \to \infty} \frac{1}{n} \prod_{i=0}^n \lambda(J_i + M_i b)
\end{align}

where $\lambda(B)$ represents the eigenvalue spectrum of $B$

Here, we draw on a result from operator algebra. We note that the maximum Lyapunov exponent is equal to the log of the Joint Spectral Radius of $\{J_0, J_1, \hdots\}$, which is given by

\begin{align}
    \text{JSR} = \lim_{n \to \infty} \norm{\prod_{i=0}^n (J_i + M_i b)}^{\frac{1}{n}}
\end{align}
for any norm $\norm{\cdot}$. For now, we will choose to work with the spectral norm, which is defined as the maximum eigenvalue of $A A^T$ for any real matrix.  

Assuming that all $J_i$ are invertible (which is true for any system with no Lyapunov exponent of $-\infty$), we find

\begin{align}
    \lambda_{\hat{f}} \leq \lim_{n \to \infty} \log \norm{\prod_{i=0}^n J_i (I +  b M_i J_i^{-1})}^{\frac{1}{n}} \\
    \leq \lim_{n \to \infty} \log \norm{A_n \prod_{i=0}^n (I +  b M_i J_i^{-1})}^{\frac{1}{n}} \\
    \leq \lim_{n \to \infty} \log  \norm{A_n}^{\frac{1}{n}} \norm{ \prod_{i=0}^n (I +  b M_i J_i^{-1})}^{\frac{1}{n}} \\
    \leq \lambda_f + \lim_{n \to \infty} \log  \norm{\prod_{i=0}^n (I +  b M_i J_i^{-1})}^{\frac{1}{n}} \\
    \leq \lambda_f + \lim_{n \to \infty} \log (\prod_{i=0}^n \norm{I +  b M_i J_i^{-1}})^{\frac{1}{n}} \\
    \leq \lambda_f +\lim_{n \to \infty} \log (\prod_{i=0}^n (1 + b \norm{M_i J_i^{-1}}))^{\frac{1}{n}} \\
    \leq \lambda_f +\lim_{n \to \infty} \log (\prod_{i=0}^n (1 + b \norm{M_i} \norm{J_i^{-1}}))^{\frac{1}{n}}
\end{align}

Note from before that the $l2$ norm of M is at most $N$, where $N$ is the number of dimensions. Since the $l2$ norm of a matrix is less than or equal to its spectral norm, we find that

\begin{align}
    \lambda_{\hat{f}} \leq \lambda_f +\lim_{n \to \infty} \log (\prod_{i=0}^n (1 + b N \norm{J_i^{-1}}))^{\frac{1}{n}} \\
    \leq \lambda_f +\lim_{n \to \infty} \frac{1}{n} \sum_{i=0}^n \log (1 + b N \norm{J_i^{-1}}) 
\end{align}

If we take the approximation that $\norm{J_i^{-1}} \approx e^{-\lambda_{f-}}$, with $\lambda_{f-}$ being the smallest Lyapunov exponent of $f$, then we can get the following bound. 

\begin{align}
    \lambda_{\hat{f}} 
    \leq \lambda_f +\lim_{n \to \infty} \frac{1}{n} \sum_{i=0}^n \log (1 + b N e^{-\lambda_{f-}}) \\
    \leq \lambda_f + \log (1 + b N e^{-\lambda_{f-}})
\end{align}

$\square$

\end{proof}

 \begin{proof}
 
{\noindent \bf Corollary 2.}
If $\norm{J_i^{-1}}$ is bounded for all $i$, then $\lim_{\epsilon, r \to 0} \lambda_{\hat{f}}(x_0) = \lambda_{f}(x_0)$ for all $x_0\in \mathcal X$. 

\end{proof}

The bound on the gradient $\hat{f}'$ tells us that the chaotic behavior of our model is limited by both the distance between data points in the training set and the error on the training set. 
For transfer learning using small datasets in the target domain, where the gaps between data points can be large and small errors are likely to be accompanied by overfitting, the upper bound on the gradient $\hat{f}'$ can be large.
The loose bound on the gradient leads to a loose bound on its Lyapunov exponent, which can lead to divergence. The key insight here is that, to avoid divergence, the focus of the training in small data sets should be on reducing the upper bound on $\hat{f}'$ and not only on reducing the empirical loss on the data. Current transfer learning methods, such as fine-tuning of neural networks, are designed according to the empirical risk minimization (ERM) principle. They typically aggressively reduce the empirical loss without considering the fact that that would lead to higher derivatives of learned function $\hat{f}$, which would lead to higher Lyapunov exponents and divergent behaviors in open-loop control.

% \textcolor{red}{
% The key insight here is that a model trained on a large amount of data from a non-chaotic system is not just better behaved in its \textit{own} domain; it is less likely to show chaotic behavior in \textit{every} domain. 
% }

An important note here is the trade-off between chaotic behavior and overfitting. $\sqrt{\epsilon}$ goes to zero slower than $r$, and so dominates the bound when $r$ is small. Although it is possible to aggressively fit the model to reduce $\epsilon$ to close to zero, this risks severe overfitting, which causes divergence even with non-chaotic networks. Thus, we want a transfer method that avoids chaotic behavior even when $\epsilon$ is fairly large, to avoid this tradeoff as much as possible.

\section{Algorithm}
In target domain, the upper bound on the Lyapunov exponent of a transferred model $\hat{f}_T$ is likely to be loose, due to large gaps between data points in $\mathcal{D}_T$ and high empirical errors. However, the original model $\hat{f}_S$ from the source domain is likely to have a tight upper bound on $\lambda_{\hat{f}_S}$ since it is trained with a dense data set $\mathcal{D}_S$. 
This leads us to an interesting question: When transferring a model, is it possible to keep the Lyapunov upper bound from the source domain, while still adapting the model to the target domain?
% how can we ensure a tight bound on the new model's Lyapunov exponent? 

We begin investigation of this question with a few observations. Firstly, we note that fine-tuning our transferred model on the new data would do nothing to penalize the network for high derivatives in the large spaces between data points. This can mean that we lose the bound given by the source dataset and instead get the much weaker bound from the target dataset. 
% Similarly, with little data, we cannot aggresively reduce $\epsilon$ without risking severe overfitting. 
Although it may be possible to construct a loss function that directly penalizes the steep gradients that lead to divergence, this would involve taking a second derivative and calculating the Hessian, which is time-intensive. Instead, we propose an efficient transfer method that is guaranteed to preserve in the target dynamics the Lyapunov bound given in the source domain. We present two algorithms for transfer designed with this constraint in mind.

% Firstly, we should note that directly fine-tuning our transferred model on the new data would do nothing to penalize the network for high slopes in the large spaces between data points. 
% This can mean that we lose the bound given by the source dataset and instead get the much weaker bound from the target dataset.
% It may be possible to construct a loss function that directly penalizes the steep gradients that lead to divergence, but this would involve taking a second derivative and calculating the Hessian, a time-expensive operation.

% As an alternative to direct fine-tuning, we propose two techniques to deal with this divergence problem. 

Firstly, we present trajectory fine-tuning (Algorithm~\ref{alg:trajectory_fine_tuning}). 
Instead of fine-tuning to predict the next state in the sequence at each step, we apply the loss over an entire sequence. 
This penalizes models that would have a high step-wise accuracy, but also high Lyapunov exponents that lead to large errors when used to predict a trajectory. 
Since the network predicts the change in the state rather than the state itself,  each prediction is of the form $x_{i+1} =\hat{f}(x_i) = x_i + h(x_i)$ where $h$ is a neural network. 
This gives the network a ResNet structure that allows us to train over full trajectories while avoiding the issue of vanishing gradients, even though the network has no explicit memory.
Since dropout causes some variation in the output space, this method of trajectory training also has the effect of exploring a broader input space than would be given by the data alone. 

This method is not novel for systems with memory, it has been used previously for partially observable environments for example. The interesting result here is that even for Markovian systems, it performs better than step-wise fine-tuning methods.

%when there is a risk the model can behave chaotically. 

% Although this method penalizes chaotic networks to a degree, it still has the potential to overfit if its training does not explore some areas of the state space. 

\begin{algo}
  \SetKwFunction{TrajectoryPrediction}{Trajectory\_Prediction}
  \SetKwProg{Fn}{Function}{:}{}
  \Fn{\TrajectoryPrediction{$f$,$x_0$}}{
  $f$: transition model\;
  
 $x_0$: start state\;
%  t = \[t_0, t_1,\ldots,t_n\] = true\_trajectory\;

  \For{$i$ in range($n$)}{
  $x_{i+1} = f(x_i)$\;
  }
 return $[x_0, x_1,\ldots,x_n]$\;
        }
\end{algo}

\begin{algorithm}
\SetAlgoLined
 $\hat{f}_S$: transition model trained in the source domain\;
 $x_0$: start state\;
 $x^* = [x_0, x^*_1,\ldots,x^*_n]$: ground-truth trajectory from the target domain\;
 $\hat{f}_T \leftarrow \hat{f}_S$ \;
 \While{Training}{
  $[x_0, x_1,\ldots,x_n]$ = {\footnotesize{\TrajectoryPrediction}} ($\hat{f}_T,x_0$)\;
  loss = $\sum_{i=0}^n \frac{1}{i} \text{MSE}(x_i, x^*_i)$\;
  update weights($\hat{f}_T$, loss)\;
 }
 Return $\hat{f}_T$: transition model for the target domain\;
\caption{Trajectory Fine-tuning}
\label{alg:trajectory_fine_tuning}
\end{algorithm}

Since our model predicts changes in state space, the final state is of the form $x_n = x_0 + \sum^{n}_{i=0} h(x_i)$. This means that the gradient backpropagated from a given $x_i$ to $h$ scales with the length of the trajectory, leading to a stronger gradient signal from the later states. To account for this assymmetry, we weight the loss from each state with a factor of $\frac{1}{i}$.

% \begin{algorithm}[H]
% \SetAlgoLined
%  m = transfer\_model\;
%  x_0 = start\_state\;
%  t = \[t_0, t_1,\ldots,t_n\] = true\_trajectory\;
%  \While{Training}{
%   \For{i}{
%   x_i = m^i(s)\;
%   }
%   loss = \sum_i^n mse(x_i, t_i) \;
%   update\_weights(m)\;
%  }
% \caption{Trajectory training}
% \end{algorithm}

% The second approach has a somewhat different intuition. Firstly, we note that data is dense in our source domain, so a well trained model will have a Lyapunov exponent close to the true one. 

The second approach, explained in Algorithm~\ref{alg:cumulative_residuals}, instead avoids the divergence problem entirely by preventing the new network $\hat{f}_T$ from having any feedback to itself.
Instead, the transferred model $\hat{f}_S$ predicts the entire trajectory, and the  new network predicts the residual at each step. The final trajectory is given by the predicted states plus a cumulative sum of the predicted residuals, weighted by a discount factor. 
By explicitly controlling the dependence on previous states with the discount factor, we are able to manually set the Lyapunov exponent of the new network $\hat{f}_T$. 
% By preventing feedback, we effectively set the Lyapunov exponent of the new network to exactly 0, preventing the system from becoming chaotic. 
Unlike trajectory training, the residual network $g$ is trained using point-wise predictions in a standard supervised fashion. 

\begin{algorithm}
\SetAlgoLined
 $\hat{f}_S$: transition model trained in the source domain\;
 $g$: residual model\;
 $\alpha$: weighting factor on $[0,1]$\;
 $\gamma$: discount factor\;
 $x_0$: start state\;
 $y_0 = 0$\;
%  t = \[t_0, t_1,\ldots,t_n\] = true\_trajectory\;
  \For{$i$ in range($n$)}{
  $x_{i+1} = \hat{f}_S(x_i)$\;
  $y_{i+1} = g(x_i) + \gamma y_i$\;
  }
 return $[x_0 + y_0, x_1+ y_1,\ldots,x_n + y_n]$\;
\caption{Cumulative Residual Trajectory Prediction}
\label{alg:cumulative_residuals}
\end{algorithm}

Next, we show that 
the model $\hat{f}_T$ obtained from the cumulative residual method has a Lyapunov exponent no greater than that of the source model $\hat{f}_S$. 

\begin{theorem}
Let $\lambda_{\hat{f}_S}$ be the maximal Lyapunov exponent of $\hat{f}_S$. Then the maximal Lyapunov exponent of $\hat{f}_T$ returned by Algorithm~\ref{alg:cumulative_residuals} is equal to $\max(\lambda_{\hat{f}_S}, \gamma)$.
\end{theorem}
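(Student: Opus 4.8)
The plan is to recognize that Algorithm~\ref{alg:cumulative_residuals} does not define a genuinely recurrent map on the reported state $z_i = x_i + y_i$, but instead maintains an internal augmented state $(x_i, y_i)$ evolving under the fixed map $F(x,y) = (\hat{f}_S(x),\, g(x) + \gamma y)$, with the returned trajectory being the linear readout $z_i = x_i + y_i$. The Lyapunov exponent of $\hat{f}_T$ is therefore that of the augmented system $F$. The structural feature that makes the result clean is the \emph{one-way coupling}: the $x$-component evolves under $\hat{f}_S$ completely independently of $y$, while $y$ is driven linearly by $\gamma$ together with a forcing term $g(x_i)$ that depends on $x$ alone. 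First I would make this augmented-state viewpoint explicit and reduce the claim to computing the maximal exponent of $F$.

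Next I would compute the Jacobian of $F$ at $(x_i, y_i)$, which is block lower-triangular,
\[
J_i = \begin{pmatrix} J_{\hat{f}_S}(x_i) & 0 \\ J_g(x_i) & \gamma I \end{pmatrix}.
\]
Because products of block lower-triangular matrices stay block lower-triangular with the diagonal blocks multiplying independently, the length-$n$ product takes the form
\[
A_n = \prod_{i=0}^{n-1} J_i = \begin{pmatrix} \Pi_n & 0 \\ C_n & \gamma^n I \end{pmatrix},
\qquad \Pi_n = \prod_{i=0}^{n-1} J_{\hat{f}_S}(x_i).
\]
The top-left block $\Pi_n$ is exactly the cocycle whose growth rate defines $\lambda_{\hat{f}_S}$, and the bottom-right block $\gamma^n I$ contributes the exponent $\log \gamma$. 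It then remains only to control the off-diagonal coupling block $C_n$.

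The main obstacle is precisely that $C_n$ could in principle grow faster than either diagonal block and thereby inflate the exponent. To rule this out I would extract the recurrence $C_n = \gamma C_{n-1} + J_g(x_{n-1}) \Pi_{n-1}$ from the block multiplication and unroll it to the convolution $C_n = \sum_{k=0}^{n-1} \gamma^{\,n-1-k}\, J_g(x_k)\, \Pi_k$. Using boundedness of $J_g$ (say $\norm{J_g} \le B$) together with $\norm{\Pi_k} \le e^{(\lambda_{\hat{f}_S}+\delta)k}$ for arbitrarily small $\delta$, this is a geometric-type sum: when $e^{\lambda_{\hat{f}_S}} > \gamma$ the largest term dominates and $\norm{C_n}^{1/n} \to e^{\lambda_{\hat{f}_S}}$; when $e^{\lambda_{\hat{f}_S}} < \gamma$ the sum is controlled by $\gamma^{n}$; and the boundary case contributes only a polynomial factor that vanishes after the $n$-th root. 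In every regime the exponential growth rate of $\norm{C_n}$ is at most $\max(\lambda_{\hat{f}_S}, \log \gamma)$.

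Finally I would assemble the pieces. Since $\norm{A_n}$ is comparable to the largest of $\norm{\Pi_n}$, $\norm{C_n}$, and $\gamma^n$, and each of these has exponential growth rate at most $\max(\lambda_{\hat{f}_S}, \log \gamma)$ while at least one diagonal block attains it, the joint-spectral-radius argument of the previous theorem yields $\lambda_{\hat{f}_T} = \max(\lambda_{\hat{f}_S}, \log \gamma)$, which is the claimed $\max(\lambda_{\hat{f}_S}, \gamma)$ under the paper's convention for the discount factor. I expect the only delicate point beyond the off-diagonal bound to be confirming that the linear readout $z_i = x_i + y_i$ does not cancel the dominant growth direction; I would handle this by noting that a fixed linear projection cannot increase the exponent and that the dominant direction generically survives, so no strict decrease occurs.
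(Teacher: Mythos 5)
Your proposal is correct and follows the same basic route as the paper: both pass to the augmented state $(x_i,y_i)$ with the one-way-coupled map $(x,y)\mapsto(\hat{f}_S(x),\,g(x)+\gamma y)$ and read the exponents off the block lower-triangular Jacobian. The difference is in how much is actually proved. The paper stops at the Jacobian and asserts "taking the limit and using the definition, the Lyapunov exponents are $\lambda_{\hat{f}_S}$ and $\ln\gamma$"; for products of non-commuting triangular matrices this is exactly the step that needs justification, since the off-diagonal block of $\prod_i J_i$ could a priori grow faster than either diagonal block. Your unrolling $C_n=\sum_{k=0}^{n-1}\gamma^{\,n-1-k}J_g(x_k)\Pi_k$ and the geometric-sum bound (under boundedness of $J_g$) is precisely the missing lemma, so your write-up is strictly more complete than the paper's. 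Two small points: the final "delicate point" about the readout $z_i=x_i+y_i$ is a non-issue, because the Lyapunov exponent in the theorem is that of the augmented map itself, not of the projected output; and you are right that the honest conclusion is $\max(\lambda_{\hat{f}_S},\ln\gamma)$ rather than the $\max(\lambda_{\hat{f}_S},\gamma)$ in the theorem statement --- the paper's own proof derives $\ln\gamma$, so the statement appears to conflate $\gamma$ with its logarithm (or implicitly assumes the convention $\gamma=e^{\text{exponent}}$).
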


\begin{proof}
We treat $x_i$ and $y_i$ in Algorithm~\ref{alg:cumulative_residuals} as separate variables in the dynamical system. 
Let $(x_{i+1}, y_{i+1}) = \hat{f}_T(x_i, y_i)$, where 
$
    \hat{f}_T(x_i, y_i) = (\hat{f}_S(x_i), g(x_i) + \gamma y_i)
$,
$\hat{f}_S(x), g(x),$ and $\gamma$ are defined as before. 
Since this is a multi-dimensional system, we first find the characteristic Lyapunov exponents using the following limit:
\begin{align}
    \lim_{n \to \infty}\frac{1}{n} \ln \lambda([J(x_n, y_n) J(x_{n-1} y_{n-1}) ... J(x_1, y_1)])
\end{align}
where $J(x, y)$ is the Jacobian of $\hat{f}_T(x,y)$, and $\lambda$ denotes the  eigenvalues of a matrix.
The Jacobian is thus given by 
\begin{align}
\begin{bmatrix}
  \frac{\partial \hat{f}_S(x)}{\partial x} & 
    \frac{\partial \hat{f}_S(x)}{\partial y} \\[1ex] % <-- 1ex more space between rows of matrix
  \frac{\partial g(x) + \gamma y}{\partial x} & 
    \frac{\partial g(x) + \gamma y}{\partial y} 
\end{bmatrix}
= 
\begin{bmatrix}
  \hat{f}'_S(x) & 
    0 \\[1ex] % <-- 1ex more space between rows of matrix
  g'(x) & 
    \gamma
\end{bmatrix}.
\end{align}
Taking the limit and using the definition, we see that the Lyapunov exponents are $\lambda_{\hat{f}_S}$ and $\ln \gamma$. If we set $\gamma$ to be less than or equal to $e^{\lambda_{\hat{f}_S}}$, then we are guaranteed that our system will not be chaotic unless the source model itself is chaotic. 

For the higher dimensional case, we instead set $\gamma $ as $e$ raised to the minimum Lyapunov exponent of $\hat{f}_S$, or 1, whichever is smaller. 
\end{proof}

%------------------REVISION -----------------------------------------------

% Each algorithm has its pros and cons. 
The cumulative residual method performs well when the Lyapunov exponent is less than or equal to 0, but cannot accurately model chaotic systems. Trajectory optimization is more general, but makes credit assignment harder, leading to higher noise and variance.   

% \subsection{Choice of Algorithm}
% Each algorithm has its pros and cons. The cumulative residual method performs well when the Lyapunov exponent is less than or equal to 1, but cannot accurately model systems that are genuinely chaotic. Trajectory optimization is more general, but credit assignment is much harder in this setting, leading to higher noise and variance.   

%------------------REVISION -----------------------------------------------
\section{Evaluation}

We tested the ability of various transfer methods to predict open-loop trajectories of the Model-T42 adaptive hand~\cite{Ma2017YaleOP}. We built two 3D-printed versions of the hand \cite{Ma2017YaleOP}, illustrated in Figure~\ref{fig:hands}. As discussed in \cite{Sintov2019}, a sufficient representation of the state of an underactuated hand is an observable 4-dimensional state composed of the object's position and the actuator loads. The hand is controlled through the change of actuator angles, where an atomic action is, in practice, unit changes to the angles of the actuators at each time-step. That is, an action moves actuator $i$ with an angle of $\lambda\gamma_i$, where $\lambda$ is a predefined unit angle and $\gamma_i$ is in the range $[\text{-}1,1]$. In the below experiments, the hand models were trained while manipulating a cylinder with $35~mm$  diameter. The cylinder is placed in the center and grasped by the two fingers. A long sequence of random actions is then applied on the actuators. We collected from both hands two large data sets of $300$ trajectories of $10^3$ time-steps each. While the entire data set of the source hand was used to train a transition model, only small percentages of the target's dataset were used for transferring the transition model, as shown in Figure~\ref{duration_before_divergence}. 

We used a simple feedforward network for our model, with two hidden layers of 128 neurons each. We used a SELU activation for the first layer and a tanh activation for the second. We found it was important to use an activation function that saturates on the last layer, to prevent the network from predicting arbitrary large movements. The SELU layer was followed by a 10\% AlphaDropout rate, and the tanh layer was followed by a 10\% dropout. 

We compare the following methods: \textbf{New Model} -- A new model trained from scratch, using only the data from the target hand.
\textbf{Direct} -- The model of the source hand applied directly to the target, without modification. \textbf{Naive Fine-tuning} -- The source model, fine-tuned only on single-step predictions on the target hand. This is the standard practice and a popular way of transferring a neural network to a new domain. \textbf{Trajectory Fine-tuning} (Algorithm~\ref{alg:trajectory_fine_tuning}) -- The source model, fine-tuned by optimizing over trajectories to minimize deviation from the true trajectory on the target hand.
Unlike naive fine-tuning, this method penalizes error from feedback loops. \textbf{Cumulative Residual} (Algorithm~\ref{alg:cumulative_residuals}) -- The cumulative structure prevents feedback from dominating the loss. We used $\alpha = 0.3$ and $\gamma = 0.9997$. \textbf{Recurrent Residual} 
-- Identical to cumulative residual, except it uses the adjusted state ($x_i + \alpha y_i$) as an additional input to the model to predict the next step. We introduce this model in order to demonstrate the importance of feedback on the model's performance.  We used $\alpha = 0.3$.

\begin{figure}[h]
\begin{center}
    \includegraphics[width=0.45\textwidth]{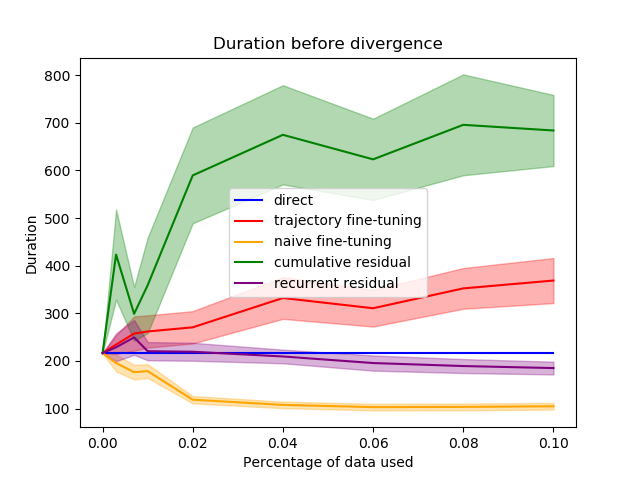}
\end{center}
\caption{Average number of time-steps in the future before the predicted path of the object deviates from the true observed path by more than $4mm$, as a function of the data used from the target hand.}
\label{duration_before_divergence}
\vspace{-0.45cm}
\end{figure}

We report in Figure~\ref{duration_before_divergence} how long predicted trajectories of the grasped object's positions were able to stay within a given radius of the true observed positions, using a separate test data set taken from the target hand. Duration here refers to the number of time-steps in the future before the model's predicted position differs from the true measured position by more than a given threshold. We can notice that the predictions of Algorithm~\ref{alg:cumulative_residuals} (cumulative residual) remain accurate for the longest time, compared to other methods, even when less than $2\%$ of the target hand's data is used. Figure~\ref{mse} shows the mean square error in the predicted  positions of the object. The results confirm again our theoretical analysis.

The second set of experiments corresponds to the {\it peg-in-the-hole} task illustrated in Figure~\ref{fig:hands} (a) and the attached video. A hole is placed in the work-space of the target hand, and the cylinder is placed at the center. The learned transition model is requested with predicting a sequence of actions that would displace the object into above the hole and drop it there without any intermediate feedback during the execution, that is in an open-loop control. The sequences of actions are generated with Monte Carlo sampling during planning. The sequence that is predicted to drop the object closer to the goal at the terminal state is selected for execution. In Figure~\ref{peg-in-hole}, we compare the models' accuracy at predicting the final position of the manipulated object's center to within a tolerance of 1 cm, which is the radius of the hole minus that of the object. The distance from the start state to the final state is given in terms of steps, each lasting one tenth of a second. The results confirm that the cumulative residuals algorithm is accurate at predicting the final state of a trajectory. Naturally, the prediction degrades as the goal is moved further away from the start state.

\begin{figure}
\begin{center}
    \includegraphics[width=0.45\textwidth]{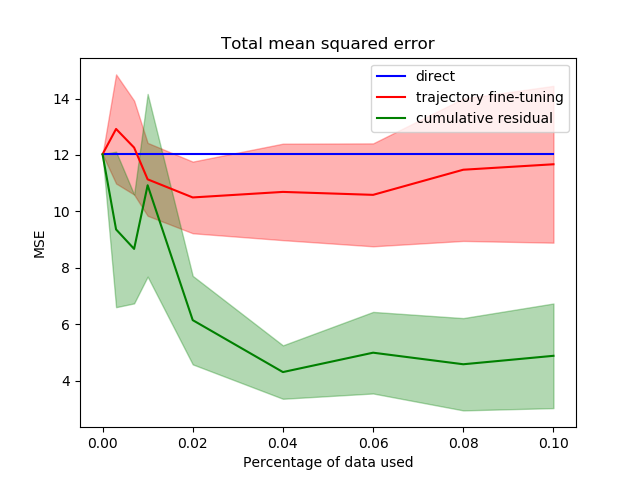}
\end{center}
\caption{Average mean square error (MSE) of the predicted position of the manipulated object, in {\it mm}, as a function of the percentage of data from the target hand used to transfer the transition model from the source hand. We report here the results of only the three most accurate methods, as the MSE of all the other methods was above $15$ mm.}
\label{mse}
\vspace{-0.5cm}
\end{figure}
\begin{figure}
\begin{center}
    \includegraphics[width=0.4\textwidth]{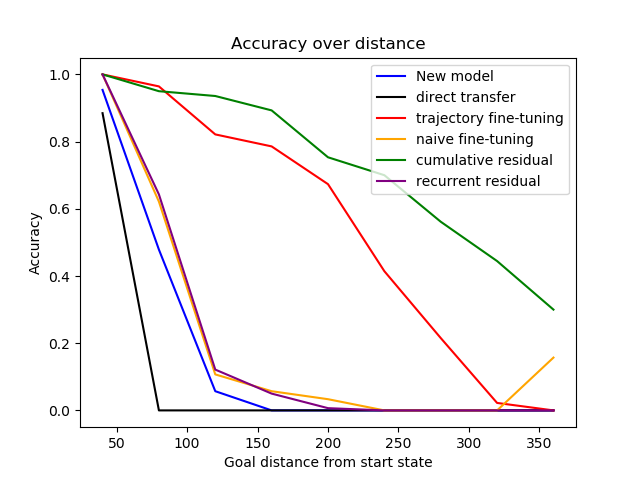}
\end{center}
\caption{{\it Peg-in-the-hole} experiment: Average success rate in predicting a sequence of actions that moves the object from the center to the hole, as a function of the distance from the initial position to the hole's position.}
\label{peg-in-hole}
\end{figure}

\begin{figure}[h]
\begin{center}
    \includegraphics[width=0.45\textwidth]{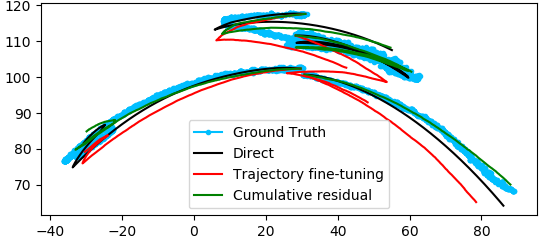}
\end{center}
\caption{Examples of predicted trajectories of an object manipulated by the hand in Figure~\ref{fig:hands} (b) in the x-y plane. The trajectories are predicted with different transfer methods.}
\label{trajectories}
\vspace{-0.5cm}
\end{figure}

Some of the results of these experiments are surprising. For example, the popular naive retraining approach performs much worse than we might normally expect. 
The key observations here are that methods that train single-step prediction directly (naive retrain and recurrent trajectory transfer) suffer from Lyapunov-caused divergence, performing much worse than the source model without any transfer training.
Models that leverage their variation to explore more of the input space (retrain), or
keep the source's Lyapunov exponent intact, (trajectory transfer) perform much better.

\section{Conclusion}
Open-loop control of under-actuated robotic hands is a challenging problem due to the difficulty of learning long-horizon predictive models.
Transfer learning offers a promising direction to obtain accurate models without intensive data collection. 
In this work, we analyzed the Lyapunov exponents of predictive transition models, and provided a simple and efficient algorithm for transferring such models across robotic hands. 
Transfer learning with the proposed algorithm was successfully demonstrated on real robotic hands. 
Future research directions include integrating the transferred models with more efficient motion planners for in-hand manipulation, and extending the proposed algorithm to transferring policies in addition to transition models.

\bibliographystyle{IEEEtran}
\bibliography{main}

\end{document}